\definecolor{PalePurp}{rgb}{0.66,0.57,0.66}
\newtheorem{theorem}{Theorem}
\newtheorem{assumption}{Assumption}
\newcommand{\OO}{\mathcal{O}}
\newcommand{\EE}[1]{\mathbb{E}\left[#1\right]}
\newcommand{\EEc}[2]{\mathbb{E}\left[\left.#1\right|#2\right]}
\newcommand{\PPc}[2]{\mathbb{P}\left[\left.#1\right|#2\right]}
\newcommand{\real}{\mathbb{R}}
\newcommand{\II}{\mathbb{I}}
\newcommand{\Ind}[1]{\mathbb{I}_{\{#1\}}}
\newcommand{\ev}[1]{\left\{#1\right\}}
\newcommand{\X}{\mathcal{X}}
\renewcommand{\S}{\mathcal{S}}
\newcommand{\A}{\mathcal{A}}
\newcommand{\bW}{W}
\newcommand{\bu}{\mathbf{u}}
\newcommand{\bv}{\mathbf{v}}
\newcommand{\hbv}{\widehat{\mathbf{v}}}
\newcommand{\hbP}{\widehat{\mathbf{P}}}
\newcommand{\hbr}{\widehat{\mathbf{r}}}
\newcommand{\ra}{\rightarrow}
\newtheorem{lemma}{Lemma}
\newcommand{\bpi}{\bm{\pi}}
\newcommand{\bphi}{\bm{\phi}}
\newcommand{\bmu}{\bm{\mu}}
\newcommand{\hbmu}{\widehat{\bm{\mu}}}
\newcommand{\btau}{\bm{\tau}}
\newcommand{\bt}{\mathbf{t}}
\newcommand{\bw}{w}
\newcommand{\bttheta}{\widetilde{\bm{\theta}}}
\newcommand{\btlambda}{\widetilde{\bm{\lambda}}}
\newcommand{\bTheta}{\bm{\Theta}}
\newcommand{\bLambda}{\bm{\Lambda}}
\renewcommand{\P}{{\mathcal{P}}}
\newcommand{\R}{{\mathcal{R}}}
\newcommand{\br}{\mathbf{r}}
\newcommand{\ba}{\mathbf{a}}
\newcommand{\bs}{\mathbf{s}}
\newcommand{\bd}{\mathbf{d}}
\newcommand{\bc}{\mathbf{c}}
\newcommand{\bp}{\mathbf{p}}
\newcommand{\regret}{\mathfrak{R}}
\newcommand{\expregret}{\widehat{\regret}}
\newcommand{\argmax}{\mathop{\rm arg\, max}}
\newcommand{\bM}{\mathbf{M}}
\newcommand{\bN}{\mathbf{N}}
\newcommand{\bS}{\mathbf{S}}
\newcommand{\abs}[1]{\left\vert#1\right\vert}
\newcommand{\one}[1]{{\mathbb I}_{\{#1\}}}           
\title{Online learning in MDPs with side information}
\author{Yasin Abbasi-Yadkori and Gergely Neu}
\begin{document} 

\maketitle

\begin{abstract} 
We study online learning of finite Markov decision process (MDP) problems when a side information vector is available. The problem is motivated by applications such as clinical trials, recommendation systems, etc. Such applications have an episodic structure, where each episode corresponds to a patient/customer. Our objective is to compete with the optimal dynamic policy that can take side information into account.

We propose a computationally efficient algorithm and show that its regret is at most $O(\sqrt{T})$, where $T$ is the number of rounds. To best of our knowledge, this is the first regret bound for this setting.
\end{abstract}

\section{Introduction}



We study online learning of finite Markov decision process (MDP) problems when a side information vector is available. The problem is motivated by applications such as clinical trials~\citep{Lavori-Dawson-2000, Murphy-vanderLaan-Robins-2001}, recommendation systems~\citep{Li-Chu-Langford-Schapire-2010}, etc.

For example, consider a multi-stage treatment strategy that specifies which treatments should be applied to a patient, given his responses to the past treatments. Each patient is specified by the outcome of several tests that are performed on the patient before the treatment begins. We collect these test results in a side information vector. A simple universal strategy uses the same policy to make recommendations for all patients, independent of the side information vector. Ideally, we would like to have treatment strategies that are adapted to each patient's characteristics. 

The problem can be modeled as a MDP problem with an infinite state space. The state variable contains the side information and the patient responses up to the current stage in the treatment. Although there are regret bounds for  MDP problems with infinite state spaces~\citep{Abbasi-Yadkori-Szepesvari-2011,Abbasi-Yadkori-2012, Ortner-Ryabko-2012}, the proposed algorithms can be computationally expensive.

Alternatively, we can model the problem as a MDP problem with changing rewards and transition probability kernels. There is however no computationally efficient algorithm with a performance guarantee for this setting.

In this paper, we model such decision problems with Markov decision processes where the transition and reward functions are allowed to depend on the side information given for each new problem instance. Using our previous example, the side information corresponds to the results of the tests preceding the treatment, actions correspond to different treatment options, and the states  are given by the outcome of the applied treatments. Every new patient  corresponds to a new episode in the decision problem, where the transitions and rewards characterizing the treatment procedure are influenced by the history of the patient in question. In what follows, we precisely formulate the outlined decision problem and provide a principled way of utilizing side information to maximize rewards.

\section{Background}
To set up our goals, we need to fix some notations. Let $\|v \|$ denote the $\ell^2$ norm of vector $v$. A finite episodic Markov decision process (MDP) is characterized by its finite state space $\S$, finite action space $\A$, transition function $P$ and reward function $r$.  
An episodic MDP also has a few special states, the starting state and some terminal states: Each episode starts from the designated starting state and ends when it reaches a terminal state.
When, in addition, the state space has a layered structure with respect to the transitions, we get the so-called {\em loop-free} variant of episodic MDPs.
The layered structure of the state space means that $\S=\cup_{l=0}^L\S_l$, where $\S_l$ is called the $l$th layer of the 
state space, $\S_l\cap\S_k = \emptyset$ for all $l\neq k$, and the agent can only move between consecutive layers. That is, for any $s\in\S_l$ and $a\in\A$, $P(s'|s,a)=0$ if $s'\not\in\S_{l+1}, l=0,\ldots,L-1$.
In particular, each episode starts at layer $0$,  from state $s_0$. In every state $s_l\in\S_l$, the learner chooses an action $a_l\in\A$, earns some reward $r(s_l,a_l)$, and is eventually transferred to state $s_{l+1}\sim P(\cdot|s,a)$. The episode ends when the learner reaches any state $s_L$ belonging to the last layer $\S_L$. 
This assumption is equivalent to assuming that each trajectory consists of exactly $L$ transitions.\footnote{Note that all loop-free state spaces can be transformed to one that satisfies our assumptions with no significant increase in the size of the problem. A simple transformation algorithm is given in Appendix A~of \citet{gyorgy07sp}.} This framework is a natural fit for episodic problems where time is part of the state variable. Figure~\ref{fig:ssp} shows an example of a loop-free episodic MDP.
For any state $s\in\S$ we will use $l_s$ to denote the index of the layer $s$ belongs to, that is, $l_s=l$ if $s\in\S_l$. 

\begin{figure*}[t]
\centering
\includegraphics[width=\textwidth]{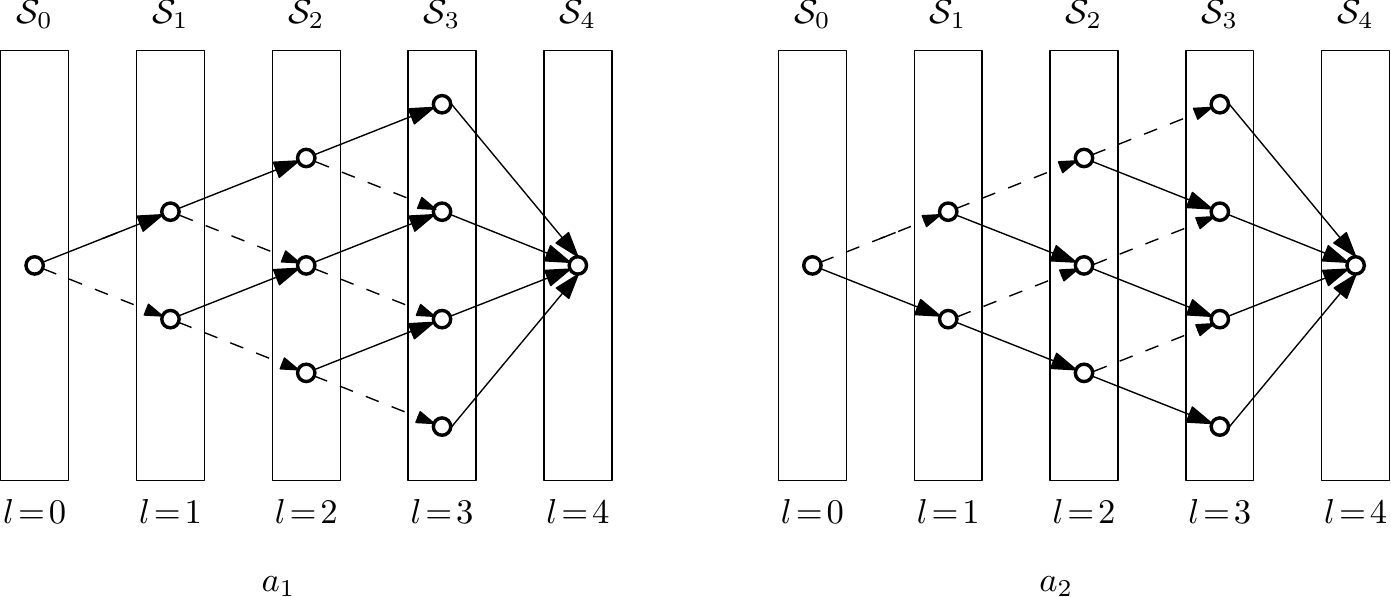}
\caption{An example of a loop-free episodic Markov decision process when two actions $a_1$ (``up'') and $a_2$ (``down'') are available in all states. Nonzero transition probabilities under each action are indicated with arrows between circles representing states. In the case of two successor states, the successor states with the intended direction with larger probabilities are connected with solid arrows, dashed arrows indicate  less probable transitions.}
\label{fig:ssp}
\end{figure*}

A deterministic policy $\pi$ (or, in short: a policy) is a mapping $\pi:\S\ra\A$. We say that a policy $\pi$ is followed in an episodic MDP problem if the action in state $s\in\S$ is set to be $\pi(s)$, independently of previous states and actions. The set of all deterministic policies will be denoted by $\Pi$. A random path $\bu = \left(\bs_0,\ba_0, \dots,\bs_{L-1},\ba_{L-1}, \bs_L\right)$ is said to be generated by policy $\pi$ under the transition model $P$ if the initial state is $\bs_0$ and $\bs_{l+1}\in\S_{l+1}$ is drawn from $P(\cdot|\bs_l,\pi(\bs_l))$ for all $l=0,1,\dots,L-1$. We denote this relation by $\bu\sim(\pi,P)$. 
Define the \emph{value} of a policy $\pi$, given a fixed reward function $r$ and a transition model $P$ as
 \[
\bW(r,\pi,P) = \EEc{\sum_{l=0}^{L-1} r(\bs_l,\pi(\bs_l))}{\bu\sim(\pi,P)},
\]
that is, the expected sum of rewards gained when following $\pi$ in the MDP defined by $r$ and $P$.

\section{The learning problem}

We consider episodic loop-free environments where the transitions and rewards are influenced by some vector $x\in D \subset\real^d$ of side information. In particular, the probability of a transition to state $s'$ given that action $a$ was chosen in state $s$ is given by the generalized linear model
\[
P_x(s'|s,a) = \sigma(\varphi(x)^\top \theta_*(s',s,a)),
\]
where $\sigma:\real\ra [0,1]$ is a \textit{link function} (such as sigmoid function), $\varphi:\real^d\ra\real^n$ is a feature mapping, and $\theta_*(s',s,a)\in \Theta \subset\real^n$ is some unknown parameter vector for each individual $(s',s,a)\in\S\times\S\times\A$. Furthermore, the rewards are parametrized as
\[
\EE{r_x(s,a)} = \sigma(\psi(x)^\top \lambda_*(s,a)),
\]
where $\psi:\real^d\ra\real^m$ is another feature mapping and $\lambda_*(s,a) \in \Lambda \subset\real^m$.

 In every episode $t=1,2,\dots,T$ of our learning problem, we are given a side information vector $x_t$, which gives rise to the reward function $r_{x_t}$ and transition functions $P_{x_t}$.
A reasonable goal in this setting is to accumulate nearly as much reward as the best dynamic policy that can take side information into account. Defining such a dynamic policy as a mapping $\phi:\real^d\ra\Pi$, we denote the best achievable performance by
\begin{equation}\label{eq:bestperf}
V_T^* = \max_{\phi:\real^d\ra \Pi} \sum_{t=1}^T W(r_{x_t},\phi(x_t),P_{x_t}).
\end{equation}
The expected value of the learner's policy $\bpi_t$ in episode $t$ will be denoted by $\bv_t = W(r_{x_t},\bpi_t,P_{x_t})$. 
We are interested in online algorithms that have no information about the parameter vectors $\theta_*$ and $\lambda_*$ at the beginning of the learning process, but minimize the following notion of \emph{regret}:
\[
\expregret_T = V_T^* - 
\sum_{t=1}^T \bv_t.
\]

\section{Algorithm}

\begin{algorithm}
\textbf{Input:} State space $\S$, action space $\A$, confidence parameter $0 < \delta <1$.

\textbf{Initialization:} 
\\
\textbf{For each episode $t=1,2,\dots,T$:}
\begin{enumerate}
\item Observe side information $x_t$.
\item Construct confidence sets according to Equations~\eqref{eq:confset_r} and \eqref{eq:confset_P} 
\item Compute $\bpi_t$, $\hbP_t$ and $\hbr_t$ according to Equation~\eqref{eq:optimistic}.  
\item Traverse path $\bu_t\sim(\bpi_t,P_{x_t})$.
\item Receive rewards $\sum_{l=0}^{L-1} r_{x_t}\bigl(\bs_l^{(t)},\ba_l^{(t)}\bigr)$.
\end{enumerate}
\caption{Algorithm for online learning in episodic MDPs with side information.}
\label{tran:alg:fullinfo_fpl}
\end{algorithm}

Our algorithm combines ideas from the UCRL2 algorithm of \citet{jaksch10ucrl} and the results of \citet{filippi10genlin}. The algorithm that we propose is based on the \textit{Optimism in the Face of Uncertainty} (OFU) principle. First proposed by \citet{Lai-Robbins-1985}, OFU is a general principle that can be employed to design efficient algorithms in many stochastic online learning problems~\citep{Auer-Cesa-Bianchi-Fischer-02,Auer-2002,Dani-Hayes-Kakade-2008,Abbasi-Yadkori-Szepesvari-2011}. The basic idea is to maintain a confidence set for the unknown parameter vector and then in every round choose an estimate from the confidence set together with a policy so that the predicted expected reward is maximized, i.e., the estimate-policy pair is chosen optimistically. 

To implement the OFU principle, we construct confidence sets $\bTheta_t$ and $\bLambda_t$ that contain the true models $\theta_*$ and $\lambda_*$ with probability at least $1-\delta$ each. The confidence parameter $\delta\in(0,1)$ is specified by the user.
Our parametric estimates take the form
\[
P^\theta_x(s'|s,a) = \sigma(\varphi(x)^\top \theta(s',s,a))
\]
and
\[
r^\lambda_x(s,a) = \sigma(\psi(x)^\top \lambda(s,a))
\]
for each $x\in\real^d$ and $(s',s,a)\in\S\times\S\times\A$. Using these notations, the confidence sets $\bTheta$ and $\bLambda$ translate to  confidence sets for the transition and reward functions as
\[
\P_x(\bTheta) = \ev{P_x^\theta: \theta\in\bTheta}\quad\mbox{and}\quad\R_x(\bLambda) = \ev{r_x^\lambda: \lambda\in\bLambda}.
\]
Using these confidence sets, we select our model and policy simultaneously as
\begin{equation}\label{eq:optimistic}
\left(\bpi_t,\hbP_t,\hbr_t\right) = \argmax_{\stackrel{\bpi_t\in\Pi}{P_{x_t}^\theta\in\P_{x_t}(\bTheta),r_{x_t}^\lambda\in\R_{x_t}(\bLambda)}}
W\left(r^\lambda_{x_t},\bpi_t,P^\theta_{x_t}\right).
\end{equation}
The above optimization task can be efficiently performed by the extended dynamic programming algorithm presented in Section~\ref{sec:ext_DP} (see also \citealp{neu12ssp-trans}). 

We employ techniques similar to \citet{filippi10genlin} to construct our confidence sets. Let 
\[
\bN_{t,s,a,s'} = I + \sum_{u=1}^{t-1} \one{\bs_{l_s}^{(u)}=s, \ba_{l_s}^{(u)}=a, \bs_{l_s+1}^{(u)}=s'} \varphi(x_u) \varphi(x_u)^\top
\]
and 
\[
\bM_{t,s,a} = I+ \sum_{u=1}^{t-1} \one{\bs_{l_s}^{(u)}=s, \ba_{l_s}^{(u)}=a} \psi(x_u) \psi(x_u)^\top \;. 
\]
Let $\bS_t(s,a)$ be the set of time steps up to time $t$ that $(s,a)$ is observed. At time $t$, we solve the equations
\begin{align}
\label{eq:lambda}
\sum_{u\in \bS_t(s,a)} (r_{x_u}(s,a) - \sigma(\psi(x_u)^\top \lambda(s,a) ) ) &= 0 \,, \\
\label{eq:theta}
\sum_{u\in \bS_t(s,a)} (\one{\bs_{l_{s'}}^{(u)} = s' } - \sigma(\varphi(x_u)^\top \theta(s,a,s') ) ) &= 0 \,,
\end{align}
to obtain $\btlambda_t(s,a)$ and $\bttheta_t(s,a,s')$. Let $\rho_t$ be an increasing function (to be specified later). Then, the confidence interval corresponding with $r_{x_t}(s,a)$ at time $t$ is $[\br^-_t(s,a), \br^+_t(s,a)]$, where
\[
\br^-_t(s,a) = \sigma(\psi(x_t)^\top \btlambda_t(s,a)) - \rho_t \| \psi(x_t) \|_{\bM_{t,s,a}^{-1}} \,,
\]
and
\[
\br^+_t(s,a) =  \sigma(\psi(x_t)^\top \btlambda_t(s,a)) + \rho_t \| \psi(x_t) \|_{\bM_{t,s,a}^{-1}}\;.
\]
Similarly, the confidence interval corresponding with $P_{x_t}(s'|s,a)$ at time $t$ is given by $[\bp^-_t(s'|s,a),\bp^+_t(s'|s,a)]$, where
\[
\bp^-_t(s'|s,a) = \sigma(\varphi(x_t)^\top \bttheta_t(s',s,a)) - \rho_t \| \varphi(x_t) \|_{\bN_{t,s,a,s'}^{-1}} \,,
\]
and
\[
\bp^+_t(s'|s,a) =  \sigma(\varphi(x_t)^\top \bttheta_t(s',s,a)) + \rho_t \| \varphi(x_t) \|_{\bN_{t,s,a,s'}^{-1}}\;.
\]
Summarizing, our confidence sets for the reward and transition functions are respectively defined as
\begin{equation}\label{eq:confset_r}
\R_{x_t}(\bLambda_t) = \ev{r:r(s,a)\in
[\br^-_t(s,a),\br^+_t(s,a)]}
\end{equation}
and
\begin{equation}\label{eq:confset_P}
\P_{x_t}(\bTheta_t) = \ev{P:P(s'|s,a)\in
[\bp^-_t(s'|s,a),\bp^+_t(s'|s,a)]}.
\end{equation}

\section{Analysis}

First, we make a number of assumptions. 
\begin{assumption}
\label{ass:Lipschitz}
Function $\sigma:\real\ra\real$ is continuously differentiable, Lipschitz with constant $k_{\sigma}$. Further, we have that $c_{r} = \inf_{x\in D, \theta\in \Theta} \dot{\sigma} (\varphi(x)^\top \theta) > 0$, and $c_{P} = \inf_{x\in D, \lambda\in \Lambda} \dot{\sigma} (\psi(x)^\top \lambda) > 0$, where $\dot\sigma$ denotes the derivative of $\sigma$. 
\end{assumption}
\begin{assumption}
\label{ass:visitor-boundedness}
There exists $L>0$ such that for all $x\in D$, $\|x \| \le L$.
\end{assumption}
\begin{assumption}
\label{ass:reward-bounded}
Function $\sigma:\real\ra\real$ is bounded in $[0,1]$. 
\end{assumption}
The main result of this section is the following theorem.
\begin{theorem}\label{thm:main}
Let Assumptions~\ref{ass:Lipschitz},\ref{ass:visitor-boundedness},\ref{ass:reward-bounded} hold. Then, with probability at least $1-\delta$, for any sequence of side information vectors,\footnote{We use $\widetilde{\OO}$ to hide logarithmic factors in the big-O notation.}
\[
\expregret_T = \widetilde{\OO}\left(L|\S|^2|\A|(n+m)\sqrt{T}\right).
\]
\end{theorem}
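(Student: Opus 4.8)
The plan is to run the standard optimism-in-the-face-of-uncertainty analysis for UCRL2-type algorithms, adapted to the generalized-linear side-information model, in three stages: (i) show the confidence sets are valid, (ii) use optimism to reduce the regret to a sum of per-episode value gaps between the optimistic model and the true model under the played policy, and (iii) bound each gap by the confidence widths and sum them with a self-normalized argument. For stage (i), I would establish that with probability at least $1-\delta$ the true parameters satisfy $\lambda_*(s,a)\in\bLambda_t$ and $\theta_*(s',s,a)\in\bTheta_t$ simultaneously for all $t$ and all triples $(s',s,a)$. This is exactly the generalized-linear concentration guarantee of \citet{filippi10genlin}: Assumption~\ref{ass:Lipschitz} (Lipschitzness of $\sigma$ together with the derivative lower bounds $c_r,c_P>0$) ensures the score equations \eqref{eq:lambda}--\eqref{eq:theta} have well-behaved solutions $\btlambda_t,\bttheta_t$ whose induced means concentrate, while Assumption~\ref{ass:visitor-boundedness} bounds the features $\varphi(x),\psi(x)$. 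Choosing $\rho_t=\widetilde{\OO}(\sqrt{n+m})$ with the right dependence on $k_\sigma,c_r,c_P,\log(1/\delta)$, and union-bounding over the at most $|\S|^2|\A|$ parameters, puts the true reward and transition functions in $\R_{x_t}(\bLambda_t)$ and $\P_{x_t}(\bTheta_t)$ for every $t$. On this event the optimistic choice \eqref{eq:optimistic} satisfies $W(\hbr_t,\bpi_t,\hbP_t)\ge W(r_{x_t},\phi^*(x_t),P_{x_t})$ for the optimal dynamic policy $\phi^*$, since the true model is feasible in the maximization; summing over $t$ and comparing with \eqref{eq:bestperf} gives
\[
\expregret_T \le \sum_{t=1}^T\Bigl(W(\hbr_t,\bpi_t,\hbP_t)-W(r_{x_t},\bpi_t,P_{x_t})\Bigr).
\]

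Next I would decompose each summand with a simulation (value-difference) lemma into a reward term and a transition term, expressing both as expectations over the path $\bu_t\sim(\bpi_t,P_{x_t})$ generated by the \emph{true} model. The reward term is $\sum_{l}\EEc{(\hbr_t-r_{x_t})(\bs_l,\bpi_t(\bs_l))}{\bu_t\sim(\bpi_t,P_{x_t})}$, and since both $\hbr_t$ and $r_{x_t}$ lie in the reward confidence interval, each contribution is at most the width $2\rho_t\|\psi(x_t)\|_{\bM_{t,\bs_l,\bpi_t(\bs_l)}^{-1}}$. The transition term is $\sum_l\EEc{\sum_{s'}(\hbP_t-P_{x_t})(s'|\bs_l,\bpi_t(\bs_l))\,\widehat V_{l+1}(s')}{\bu_t\sim(\bpi_t,P_{x_t})}$, where $\widehat V_{l+1}$ is the value-to-go under $\hbP_t$; Assumption~\ref{ass:reward-bounded} bounds $\widehat V_{l+1}$ by $L$, so centering $\widehat V_{l+1}$ and using $\sum_{s'}(\hbP_t-P_{x_t})(s'|\cdot)=0$ bounds each layer's contribution by $L\sum_{s'}2\rho_t\|\varphi(x_t)\|_{\bN_{t,\bs_l,\bpi_t(\bs_l),s'}^{-1}}$. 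The extra factors $L$ (value range) and $|\S|$ (sum over successor states $s'$) that surface here are precisely what will separate the transition contribution from the reward contribution in the final bound.

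Finally I would sum these widths across episodes. Each width is measurable with respect to the information available at the start of episode $t$, so replacing the path-expectations by the realized per-episode visits costs only a martingale-difference error, controlled by Azuma--Hoeffding at the price of an additive $\widetilde{\OO}(L\sqrt{T})$ term. Grouping the realized widths by the visited state-action pair and invoking the self-normalized ``elliptical potential'' bound $\sum_{t\in\bS_T(s,a)}\|\psi(x_t)\|_{\bM_{t,s,a}^{-1}}^2=\widetilde{\OO}(m)$ (and its $\varphi,\bN,n$ analogue), then applying Cauchy--Schwarz over $t$ together with the crude count $|\bS_T(s,a)|\le T$, turns each group into a $\sqrt{T}$ term. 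Collecting the factors — $\rho_t=\widetilde{\OO}(\sqrt{n+m})$, the per-group $\sqrt{mT}$ or $\sqrt{nT}$, the $L|\S|$ from the transition decomposition, and the $|\S||\A|$ groups — yields the claimed $\widetilde{\OO}(L|\S|^2|\A|(n+m)\sqrt{T})$.

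I expect the main obstacle to be the concentration step (i): making the generalized-linear confidence sets valid with a radius $\rho_t$ that is simultaneously small enough for the target rate and uniformly correct over all parameters and all times. This is where Assumption~\ref{ass:Lipschitz} is indispensable, since without a lower bound on $\dot\sigma$ the inverse link fails to be Lipschitz and the GLM estimate need not concentrate at the $1/\sqrt{\text{count}}$ rate. A secondary difficulty is the bookkeeping that couples the expectation-based value decomposition (which lives under the true model) with the realized-visitation counts driving $\bM_{t,s,a}$ and $\bN_{t,s,a,s'}$; in particular, handling the successor-state sum $\sum_{s'}\|\varphi(x_t)\|_{\bN_{t,s,a,s'}^{-1}}$ cleanly inside the elliptical potential argument is the part most likely to require care.
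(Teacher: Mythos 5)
Your proposal follows essentially the same route as the paper: optimism reduces the regret to the sum of optimistic-minus-true value gaps; these are decomposed into reward and transition confidence widths along the true trajectory (the paper organizes this via an occupancy-measure perturbation lemma rather than your value-to-go telescoping, but the two are the same simulation argument and both produce the extra factor $L$ on the transition term); the expectations are replaced by realized visits via Azuma--Hoeffding; and the widths are summed with Cauchy--Schwarz and the elliptical potential lemma. Your accounting of the factors $L$, $|\S|^2$, $|\A|$, and $n+m$, and your identification of the Filippi-style generalized-linear concentration as the key external ingredient, both match the paper's proof.
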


We will need a number of lemmas to prove the theorem. 
\begin{lemma}[\citet{filippi10genlin}, Proposition 1]
\label{lem:conf-intervals}
Take any $\delta,t$ such that $0<\delta<\min\left(1,\frac{m}{e}, \frac{n}{e} \right)$ and $1+\max(m,n,2) \le t \le T$. Let $\kappa = \sqrt{3 + 2 \log (1+2L^2)}$. Let
\begin{align*}
\beta_{r,t}(\delta) &= \frac{2 k_{\sigma} \kappa }{c_{r}} \sqrt{2 n \log (t) \log(n/\delta)}\,,\\
\beta_{P,t}(\delta) &= \frac{2 k_{\sigma} \kappa }{c_{P}} \sqrt{2 m \log (t) \log(m/\delta)}\; .
\end{align*}
Let $\btlambda_t(s,a)$ and $\bttheta_t(s,a,s')$ be the solutions of \eqref{eq:lambda} and \eqref{eq:theta}, respectively. Then, for any $x\in\real^d$, any $z=(s,a)$, and any $s'\in \S_{l_s+1}$, with probability at least $1-\delta$, it holds that
\begin{align*}
\abs{ \sigma\left(\varphi(x)^\top \theta_*(z,s')\right) - \sigma\left(\varphi(x)^\top \bttheta_t(z,s') \right) } \le\beta_{r,t}(\delta) \| \varphi(x) \|_{\bN_{t,z,s'}^{-1}}\;.
\end{align*}
Also, with probability at least $1-\delta$,
\begin{align*}
\abs{ \sigma\left(\psi(x)^\top \lambda_*(z)\right) - \sigma\left(\psi(x)^\top \btlambda_t(z) \right) } \le\beta_{P,t}(\delta) \| \psi(x) \|_{\bM_{t,z}^{-1}}\;.
\end{align*}
\end{lemma}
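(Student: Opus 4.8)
The two displayed inequalities are symmetric, so the plan is to prove the transition ($\theta$-)bound in detail and obtain the reward bound by the identical argument with $\psi,m,c_P$ replacing $\varphi,n,c_r$. First I would read \eqref{eq:theta} as the vector-valued maximum quasi-likelihood score equation $\sum_{u\in\bS_t(z)}\bigl(y_u-\sigma(\varphi(x_u)^\top\theta)\bigr)\varphi(x_u)=0$, where $z=(s,a)$ and $y_u=\one{\bs_{l_{s'}}^{(u)}=s'}$, so that $\bttheta_t(z,s')$ is the standard generalized-linear estimator. The noise $y_u-\EEc{y_u}{\mathcal{F}_{u-1}}$ is a bounded martingale difference with $\EEc{y_u}{\mathcal{F}_{u-1}}=\sigma(\varphi(x_u)^\top\theta_*(z,s'))$. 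Introducing the empirical mean map $g_t(\theta)=\sum_{u\in\bS_t(z)}\sigma(\varphi(x_u)^\top\theta)\,\varphi(x_u)$, the estimator solves $g_t(\bttheta_t)=\sum_u y_u\varphi(x_u)$ while the truth satisfies $g_t(\theta_*)=\sum_u \sigma(\varphi(x_u)^\top\theta_*)\varphi(x_u)$, whence
\[
g_t(\bttheta_t)-g_t(\theta_*)=\sum_{u\in\bS_t(z)}\bigl(y_u-\sigma(\varphi(x_u)^\top\theta_*)\bigr)\varphi(x_u)=:\varepsilon_t ,
\]
a self-normalized martingale sum.

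Next I would invert the mean map by the mean value theorem: $g_t(\bttheta_t)-g_t(\theta_*)=G_t(\bttheta_t-\theta_*)$ with $G_t=\int_0^1 \nabla g_t(\theta_*+v(\bttheta_t-\theta_*))\,dv$ and $\nabla g_t(\theta)=\sum_u \dot\sigma(\varphi(x_u)^\top\theta)\,\varphi(x_u)\varphi(x_u)^\top$. Assumption~\ref{ass:Lipschitz} gives $\dot\sigma\ge c_r$ on the relevant range, so $G_t\succeq c_r\,\bN_{t,z,s'}$ (the $+I$ in $\bN_{t,z,s'}$ serving only to guarantee positive-definiteness and invertibility, and strict monotonicity of $\sigma$ guaranteeing the estimator is well defined). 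Then $\bttheta_t-\theta_*=G_t^{-1}\varepsilon_t$, and combining the Lipschitz bound for $\sigma$ with Cauchy--Schwarz in the $\bN_{t,z,s'}$-norm,
\[
\abs{\sigma(\varphi(x)^\top\theta_*)-\sigma(\varphi(x)^\top\bttheta_t)}\le k_\sigma\abs{\varphi(x)^\top G_t^{-1}\varepsilon_t}\le \frac{k_\sigma}{c_r}\,\|\varphi(x)\|_{\bN_{t,z,s'}^{-1}}\,\|\varepsilon_t\|_{\bN_{t,z,s'}^{-1}} .
\]
This reduces everything deterministically to controlling the self-normalized quantity $\|\varepsilon_t\|_{\bN_{t,z,s'}^{-1}}$.

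The decisive step, which I expect to be the main obstacle, is the high-probability bound on $\|\varepsilon_t\|_{\bN_{t,z,s'}^{-1}}$. Here I would invoke a self-normalized martingale tail inequality of the Peña--Lai--Shao (method-of-mixtures) type, exactly as in \citet{filippi10genlin}: since each $\varphi(x_u)$ is $\mathcal{F}_{u-1}$-measurable and the noise is bounded (hence conditionally sub-Gaussian), one obtains with probability at least $1-\delta$ a bound of order $\sqrt{n\log(t)\log(n/\delta)}$, in which the growth $\log\det \bN_{t,z,s'}\lesssim n\log t$ is controlled by Assumption~\ref{ass:visitor-boundedness} (the bound $\|x\|\le L$ bounding the feature norms); this is precisely where $\kappa=\sqrt{3+2\log(1+2L^2)}$ enters, and where the regime $t\ge 1+\max(m,n,2)$ and $\delta<\min(1,m/e,n/e)$ keep the log-determinant and covering estimates valid. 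Substituting this bound into the previous display yields the constant $\beta_{r,t}(\delta)=\tfrac{2k_\sigma\kappa}{c_r}\sqrt{2n\log(t)\log(n/\delta)}$, the factor $2$ absorbing the numerical constants of the self-normalized inequality, and completes the transition bound; the reward bound is identical with $\psi,m,c_P,\bM_{t,z}$ and Assumption~\ref{ass:reward-bounded} ensuring boundedness of the reward noise, giving $\beta_{P,t}(\delta)$. Beyond this concentration step and the verification of invertibility/monotonicity for $G_t^{-1}$, the remainder is the routine error-propagation carried out above.
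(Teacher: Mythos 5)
The paper gives no proof of this lemma at all---it is imported verbatim as Proposition~1 of \citet{filippi10genlin}---and your sketch reconstructs that cited argument along exactly its original lines: GLM score equation, mean-value-theorem inversion using $\dot\sigma \ge c_r$, the Lipschitz constant $k_\sigma$, and a self-normalized (method-of-mixtures) martingale bound, so your approach is the same as the source the paper relies on. The one technical wrinkle is your claim $G_t \succeq c_r \bN_{t,z,s'}$: since $G_t$ contains no counterpart of the identity regularizer, only $G_t \succeq c_r\left(\bN_{t,z,s'} - I\right)$ holds, and closing this gap (as \citet{filippi10genlin} do via an initialization phase ensuring the unregularized design matrix is well conditioned, which is what the condition $t \ge 1+\max(m,n,2)$ is for, at the cost of a constant factor absorbed into $\beta_{r,t}$) is needed before the displayed Cauchy--Schwarz step is legitimate.
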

\begin{lemma}[\citet{Abbasi-Yadkori-2012}, Lemma E.3]
\label{lem:dir-var}
Let $\{w_1,\dots, w_t\}$ be a sequence in $\real^k$. Define $W_s = I + \sum_{\tau=1}^{s-1} w_\tau w_\tau^\top$. If $\|w_\tau \|\le L$ for all $\tau$, then
\[
\sum_{s=1}^t \min\left(1,\|w_s \|_{W_s^{-1}}^2\right) \le 2 k \log \left( 1 + \frac{t L^2}{k} \right) \; .
\] 
\end{lemma}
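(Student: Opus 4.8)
The plan is to use the standard \emph{elliptical potential} argument, which converts the sum of truncated weighted norms into a telescoping log-determinant. I would build the proof on two elementary facts. First, applying the matrix determinant lemma to the rank-one update $W_{s+1} = W_s + w_s w_s^\top$ gives
\[
\det(W_{s+1}) = \det(W_s)\bigl(1 + \|w_s\|_{W_s^{-1}}^2\bigr);
\]
since $W_1 = I$ and each $W_s\succeq I$ is invertible, iterating this identity telescopes to
\[
\prod_{s=1}^t \bigl(1 + \|w_s\|_{W_s^{-1}}^2\bigr) = \det(W_{t+1}).
\]
Second, I would record the scalar inequality $\min(1,x)\le 2\log(1+x)$ for all $x\ge 0$: on $[0,1]$ the gap $2\log(1+x)-x$ is increasing from $0$, while for $x>1$ one has $2\log(1+x)\ge 2\log 2 >1$.

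Next I would combine the two. Applying the scalar inequality with $x=\|w_s\|_{W_s^{-1}}^2$, summing over $s$, and using the telescoped product yields
\[
\sum_{s=1}^t \min\bigl(1,\|w_s\|_{W_s^{-1}}^2\bigr)
\le 2\sum_{s=1}^t \log\bigl(1+\|w_s\|_{W_s^{-1}}^2\bigr)
= 2\log\det(W_{t+1}).
\]
It then remains to bound the determinant. Since $W_{t+1}=I+\sum_{\tau=1}^t w_\tau w_\tau^\top$ and $\|w_\tau\|\le L$, the trace satisfies $\mathrm{tr}(W_{t+1})\le k+tL^2$, so by the AM--GM inequality applied to its eigenvalues
\[
\det(W_{t+1}) \le \left(\frac{\mathrm{tr}(W_{t+1})}{k}\right)^{\!k} \le \left(1+\frac{tL^2}{k}\right)^{\!k}.
\]
Taking logarithms and substituting into the previous display gives the claimed bound $2k\log(1+tL^2/k)$.

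I do not anticipate a substantive obstacle, as each step is elementary and the argument is short. The only points meriting care are the rank-one determinant identity (together with the invertibility of each $W_s$, guaranteed by $W_s\succeq I$) and checking the scalar inequality separately in the two regimes $x\le 1$ and $x>1$; everything else is routine linear algebra.
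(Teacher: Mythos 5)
Your proof is correct: the rank-one determinant identity, the scalar bound $\min(1,x)\le 2\log(1+x)$, and the trace/AM--GM determinant estimate together give exactly the claimed inequality. The paper itself offers no proof of this lemma --- it is imported verbatim from \citet{Abbasi-Yadkori-2012} (Lemma E.3) --- and your argument is precisely the standard elliptical-potential proof used there, so there is nothing to reconcile.
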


Let $\phi^*$ be the dynamic policy achieving the maximum in \eqref{eq:bestperf}. Also, let $\hbv_t = W\left(\hbr_t,\bpi_t,\hbP_t\right)$ be the estimate of the learner's value $\bv_t$. 
Since we select our model and policy optimistically, we have $\hbv_t \ge W(r_{x_t},\bphi^*_{x_t},P_{x_t})$. 
It follows that the regret can be bounded as 
\begin{align}
\notag
\expregret_T =& V_T^* - \sum_{t=1}^T \bv_t
\\
\notag
=& V_T^* - \sum_{t=1}^T \hbv_t + \sum_{t=1}^T \left(\hbv_t - \bv_t\right)
\\
\label{eq:regret-decomp}
\le& \sum_{t=1}^T \left(\hbv_t - \bv_t\right).
\end{align}
To treat this term, we use some results by \citet{neu12ssp-trans}. Consider $\bmu_t(s) = \PPc{\bs_{l_s} = s}{\bu\sim(\bpi_t,P_{x_t})}$, that is, the probability that a trajectory generated by $\bpi_t$ and $P_{x_t}$ includes $s$. Note that given a layer $\X_l$, the restriction $\bmu_{t,l}:\X_l\ra[0,1]$ is a distribution. Define an estimate of $\bmu_t$ as $\hbmu_t(s) = \PPc{\bs_l = s}{\bu\sim(\bpi_t,\hbP_t)}$. 
First, we repeat Lemma~4 of \citet{neu12ssp-trans}.
\begin{lemma}\label{lem:p_to_mu}
Assume that there exists some function $\bd_t: \S\times\A\ra \mathbb{R}^+$ such that $\left\|\hbP_t(\cdot|s,a) - P_{x_t}(\cdot|s,a)\right\|_1 \le \bd_t(s,a)$ 
holds for all $(s,a)\in\S\times\A$.
Then
\[
\sum_{s_l\in\S_l}|\hbmu_t(s_l) - \bmu_t(s_l)| \le \sum_{k=0}^{l-1} \sum_{s_k\in\S_k} \bmu_t(s_k)\,  \bd_t\left(s_k, \bpi_t(s_k)\right)
\]
for all $l=1,2,\dots,L-1$.
\end{lemma}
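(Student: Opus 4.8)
The plan is to prove the inequality by induction on the layer index $l$, exploiting the fact that in a loop-free MDP the occupancy measures propagate forward one layer at a time under the fixed policy $\bpi_t$. The two occupancy measures satisfy the forward recursions $\bmu_t(s_l) = \sum_{s_{l-1}\in\S_{l-1}} \bmu_t(s_{l-1})\,P_{x_t}(s_l|s_{l-1},\bpi_t(s_{l-1}))$ and $\hbmu_t(s_l) = \sum_{s_{l-1}\in\S_{l-1}} \hbmu_t(s_{l-1})\,\hbP_t(s_l|s_{l-1},\bpi_t(s_{l-1}))$, since both chains use the same action in each state but differ in their kernels. The base case $l=0$ is immediate: both measures are the point mass at $s_0$, so the left-hand side vanishes, and the right-hand side is an empty sum, hence also zero.

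For the inductive step, I would subtract the two recursions termwise and insert the mixed quantity $\bmu_t(s_{l-1})\,\hbP_t(s_l|s_{l-1},\bpi_t(s_{l-1}))$ (add and subtract), which splits $\hbmu_t(s_l)-\bmu_t(s_l)$ into a part carrying the previous layer's occupancy error $\hbmu_t(s_{l-1})-\bmu_t(s_{l-1})$ weighted by the \emph{estimated} kernel $\hbP_t$, plus a part carrying the one-step transition error $\hbP_t-P_{x_t}$ weighted by the \emph{true} occupancy $\bmu_t(s_{l-1})$. Taking absolute values, summing over $s_l\in\S_l$, and exchanging the order of summation collapses the first part using $\sum_{s_l\in\S_l}\hbP_t(s_l|s_{l-1},\bpi_t(s_{l-1}))=1$ --- leaving exactly $\sum_{s_{l-1}\in\S_{l-1}}|\hbmu_t(s_{l-1})-\bmu_t(s_{l-1})|$ --- while the second part becomes $\sum_{s_{l-1}\in\S_{l-1}}\bmu_t(s_{l-1})\,\|\hbP_t(\cdot|s_{l-1},\bpi_t(s_{l-1}))-P_{x_t}(\cdot|s_{l-1},\bpi_t(s_{l-1}))\|_1$, which the hypothesis bounds by $\sum_{s_{l-1}\in\S_{l-1}}\bmu_t(s_{l-1})\,\bd_t(s_{l-1},\bpi_t(s_{l-1}))$.

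Writing $E_l = \sum_{s_l\in\S_l}|\hbmu_t(s_l)-\bmu_t(s_l)|$, this yields the one-step recursion $E_l \le E_{l-1} + \sum_{s_{l-1}\in\S_{l-1}}\bmu_t(s_{l-1})\,\bd_t(s_{l-1},\bpi_t(s_{l-1}))$, and unrolling from $E_0=0$ telescopes the accumulated per-layer transition errors into exactly the claimed bound $\sum_{k=0}^{l-1}\sum_{s_k\in\S_k}\bmu_t(s_k)\,\bd_t(s_k,\bpi_t(s_k))$. The only genuinely delicate point is the bookkeeping of the add-and-subtract step: the propagated error must be attached to the estimated kernel $\hbP_t$ (whose rows sum to one, so the prior layer's error carries forward with coefficient exactly one), while the freshly introduced transition error must be attached to the true occupancy $\bmu_t$ (so that the hypothesis on $\bd_t$ applies and produces the correctly weighted right-hand side). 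Everything else is the triangle inequality and nonnegative rearrangement of sums.
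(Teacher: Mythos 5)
Your induction is correct and is precisely the standard argument: the paper itself does not prove this lemma but imports it verbatim as Lemma~4 of \citet{neu12ssp-trans}, and the proof there is exactly your add-and-subtract decomposition of the one-step occupancy recursion, with the propagated error attached to $\hbP_t$ (rows summing to one) and the fresh transition error attached to $\bmu_t$ so the hypothesis on $\bd_t$ applies. No gaps; your remark about which kernel to attach to which factor is indeed the only point where the bookkeeping could go wrong, and you have it the right way around.
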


Using this result, we can prove the following statement. 
\begin{lemma}\label{lem:trans_to_v}
Assume that there exist some functions $\bd_t: \S\times\A \ra \mathbb{R}^+$ and $\bc_t: \S\times\A \ra \mathbb{R}^+$ such that $\left\|\hbP_t(\cdot|s,a) - P_{x_t}(\cdot|s,a)\right\|_1 \le \bd_t(s,a)$ 
and $\left|\hbr_t(s,a) - r_{x_t}(s,a)\right| \le \bc_t(s,a)$ 
hold for all $(s,a)\in\S\times\A$ with probability at least $1-\delta$ each. Then with probability at least $1-4\delta$,
\[
\begin{split}
\sum_{t=1}^T (\hbv_t - \bv_t) \le& \sum_{t=1}^T \sum_{l=0}^{L-1} L\bd_t\left(\bs_l^{(t)}, \ba_l^{(t)}, x_t\right) +  \sum_{t=1}^T \sum_{l=0}^{L-1} \bc_t\left(\bs_l^{(t)}, \ba_l^{(t)}, x_t\right) + (L+1) |\S| \sqrt{2\, T\, \log\frac{L}{\delta}}.
\end{split}
\]
\end{lemma}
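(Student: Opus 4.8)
The plan is to route everything through the state-occupancy measures $\bmu_t$ and $\hbmu_t$ introduced just above the statement. Since each trajectory meets exactly one state per layer, both values are occupancy-weighted sums of rewards, $\bv_t=\sum_{s\in\S}\bmu_t(s)\,r_{x_t}(s,\bpi_t(s))$ and $\hbv_t=\sum_{s\in\S}\hbmu_t(s)\,\hbr_t(s,\bpi_t(s))$ (terminal states in $\S_L$ contribute nothing). Adding and subtracting $\sum_s\bmu_t(s)\hbr_t(s,\bpi_t(s))$ splits the per-episode gap as
\[
\hbv_t-\bv_t=\underbrace{\sum_{s\in\S}\bmu_t(s)\bigl(\hbr_t-r_{x_t}\bigr)(s,\bpi_t(s))}_{\text{reward error}}+\underbrace{\sum_{s\in\S}\bigl(\hbmu_t(s)-\bmu_t(s)\bigr)\hbr_t(s,\bpi_t(s))}_{\text{occupancy mismatch}},
\]
where the reward error is weighted by the \emph{true} occupancy $\bmu_t$, which is exactly the law of the realized path $\bu_t$.

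For the reward error I would use the hypothesis $|\hbr_t-r_{x_t}|\le\bc_t$ to get $\sum_s\bmu_t(s)\bc_t(s,\bpi_t(s))=\EEc{\sum_{l=0}^{L-1}\bc_t(\bs_l,\ba_l)}{\bu\sim(\bpi_t,P_{x_t})}$. For the occupancy mismatch I would use $\hbr_t\in[0,1]$ (Assumption~\ref{ass:reward-bounded}) to bound the sum by $\sum_s|\hbmu_t(s)-\bmu_t(s)|=\sum_{l=0}^{L-1}\sum_{s_l\in\S_l}|\hbmu_t(s_l)-\bmu_t(s_l)|$, and then invoke Lemma~\ref{lem:p_to_mu} layer by layer. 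After swapping the order of summation, each source layer $k$ is counted once for every later layer $l>k$, i.e.\ at most $L$ times, which gives $L\sum_s\bmu_t(s)\bd_t(s,\bpi_t(s))=L\,\EEc{\sum_{l=0}^{L-1}\bd_t(\bs_l,\ba_l)}{\bu\sim(\bpi_t,P_{x_t})}$. Each hypothesis holds with probability $1-\delta$, so this combined expectation bound holds with probability at least $1-2\delta$; note that it produces coefficient $L$ on $\bd_t$ and $1$ on $\bc_t$, matching the target.

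It then remains to replace the two conditional expectations by the realized sums $\sum_l\bc_t(\bs_l^{(t)},\ba_l^{(t)})$ and $\sum_l\bd_t(\bs_l^{(t)},\ba_l^{(t)})$ along $\bu_t$. Because $\bu_t\sim(\bpi_t,P_{x_t})$ is drawn only after $\bpi_t,\hbP_t,\hbr_t$ are fixed, each realized per-episode cost has conditional mean equal to the corresponding expectation, so the two discrepancies form martingale-difference sequences for the natural filtration. I would apply Azuma--Hoeffding to each (reward and transition), bounding the per-episode increments by clipping $\bc_t,\bd_t$ and using the crude estimate $\sum_s|\bmu_t(s)-\one{\bs_{l_s}^{(t)}=s}|\le|\S|$; this yields ranges of order $|\S|$ and $L|\S|$, hence the combined $(L+1)|\S|\sqrt{2T\log(L/\delta)}$ term, at the cost of two further failure probabilities $\delta$. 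Summing over $t$ and taking a union bound over all four events gives the claim with probability at least $1-4\delta$.

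The main obstacle is the occupancy-mismatch term: one must convert an error in the transition kernel into an error in the induced visitation distribution, which is exactly what Lemma~\ref{lem:p_to_mu} supplies, and then organize the telescoping over layers carefully enough to land the single factor $L$ (rather than $L^2$) in front of $\bd_t$. A secondary, more technical point is the bookkeeping in the concentration step: obtaining the stated range $(L+1)|\S|$ requires combining $\hbr_t\in[0,1]$ with suitable clipping of the confidence widths, and arranging the union bound so that the layer count contributes only the $\log(L/\delta)$ factor while the total probability budget stays at $1-4\delta$.
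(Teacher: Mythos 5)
Your proposal is correct and follows essentially the same route as the paper: the identical decomposition of $\hbv_t-\bv_t$ into an occupancy-mismatch term (bounded via $\hbr_t\in[0,1]$ and Lemma~\ref{lem:p_to_mu}, with the sum over target layers yielding the single factor $L$) and a reward-error term weighted by $\bmu_t$, followed by replacing the occupancy-weighted sums with realized trajectory sums via a martingale-difference/Hoeffding--Azuma argument and a union bound over the four events. The minor differences in bookkeeping for the Azuma ranges do not change the argument.
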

\begin{proof}
Fix an arbitrary $t:1\le t\le T$. We have $\hbv_t = \sum_{l=0}^{L-1} \sum_{s\in\S_l}\hbmu_t(s) \hbr_t(s,\bpi_t(s))$
and $\bv_t = \sum_{l=0}^{L-1} \sum_{s\in\S_l}\bmu_t(s) r_{x_t}(s,\bpi_t(s))$, 
thus
\[
\begin{split}
\hbv_t - \bv_t =& \sum_{l=0}^{L-1} \sum_{s\in\S_l} \left(\hbmu_t(s) - \bmu_t(s)\right) \hbr_t(s,\bpi_t(s)) 
+\sum_{l=0}^{L-1} \sum_{s\in\S_l} \bmu_t(s) \left(\hbr_t(s,\bpi_t(s)) - r_{x_t}(s,\bpi_t(s))\right)
\\
\le& \sum_{l=0}^{L-1} \sum_{s\in\S_l} \left|\hbmu_t(s) - \bmu_t(s)\right|
+\sum_{l=0}^{L-1} \sum_{s\in\S_l} \bmu_t(s) \left(\hbr_t(s,\bpi_t(s)) - r_{x_t}(s,\bpi_t(s))\right)
\end{split}
\]
Using our upper bound on $\left\|\hbP_t(\cdot|s,a) - P(\cdot|s,a)\right\|_1$ for all $(s,a) \in \S\times\A$ along with Lemma~\ref{lem:p_to_mu}, we get
\begin{equation}\label{eq:mubound}
\begin{split}
\sum_{s\in\S_l} \left|\hbmu_t(s) - \bmu_t(s)\right| &\le \sum_{k=0}^{l-1} \sum_{s_k\in\S_k} \bmu_t(s_k)\,  \bd_t\left(s_k, \bpi_t\left(s_k\right)\right) \\
&=  \sum_{k=0}^{l-1} \bd_t\left(\bs_k^{(t)}, \ba_k^{(t)}\right) + \sum_{k=0}^{l-1} \sum_{s_k\in\S_k}\left(\bmu_t(s_k) - \Ind{\bs_{k}^{(t)} = s_k}\right) \bd_t\left(s_k,\bpi_t\left(s_k\right)\right)
\end{split}
\end{equation}
with probability at least $1-\delta$.
Similarly, using our upper bound on $\left|\hbr_t(s,a) - r_{x_t}(s,a)\right|$ for all $(s,a) \in \S\times\A$, we get
\begin{equation}\label{eq:rbound}
\begin{split}
\sum_{l=0}^{L-1} \sum_{s\in\S_l} \bmu_t(s) \left(\hbr_t(s,\bpi_t(s)) - r_{x_t}(s,\bpi_t(s))\right) &\le  \sum_{l=0}^{L-1} \bc_t\left(\bs_l^{(t)}, \ba_l^{(t)}\right) \\ 
&\qquad+ \sum_{l=0}^{L-1} \sum_{s_l\in\S_l}\left(\bmu_t(s_l) - \Ind{\bs_{l}^{(t)} = s_l}\right) \bc_t\left(s_l,\bpi_t\left(s_l\right)\right)
\end{split}
\end{equation}
with probability at least $1-\delta$.
For the second term on the right hand side of \eqref{eq:mubound}, notice that
$
\left(\bmu_t(s_k) - \II_{\left\{\bs_{k}^{(t)} = s_k\right\}}\right) 
$
form a martingale difference sequence with respect to $\{\bu_t\}_{t=1}^T$ and thus by the Hoeffding--Azuma inequality and $\bd_t\le 2$ almost surely, we have
\[
\sum_{t=1}^T\left(\bmu_t(s_k) - \II_{\left\{\bs_{k}^{(t)} = s_k\right\}}\right) \bd_t\left(s_k,\bpi_t\left(s_k\right)\right) \le \sqrt{2\, T\, \log\frac{ L}{\delta}}
\]
with probability at least $1-\delta/L$.
The union bound implies that we have, with probability at least $1-2\delta$ simultaneously for all $l=1,\ldots,L$, 
\begin{equation}
\begin{split}
\sum_{t=1}^T \sum_{s\in\S_l} \left|\hbmu_t(s) - \bmu_t(s)\right| &\le \sum_{t=1}^T \sum_{k=0}^{l-1} \bd_t\left(\bs_k^{(t)}, \ba_k^{(t)}\right)  + \sum_{k=0}^{l-1}|\S_k| \sqrt{2\, T\, \log\frac{L}{\delta}}  \\
&\le \sum_{t=1}^T \sum_{k=0}^{L-1} \bd_t\left(\bs_k^{(t)}, \ba_k^{(t)}\right) + |\S| \sqrt{2\, T\, \log\frac{L}{\delta}}.
\label{eq:muboundSum}
\end{split}
\end{equation}
By a similar argument, 
\begin{equation}
\begin{split}
\sum_{t=1}^T \sum_{l=0}^{L-1} \sum_{s\in\S_l} \bmu_t(s) \left(\hbr_t(s,\bpi_t(s)) - r_{x_t}(s,\bpi_t(s))\right) \le \sum_{t=1}^T \sum_{l=0}^{L-1} \bc_t\left(\bs_l^{(t)}, \ba_l^{(t)}\right)  + |\S| \sqrt{2\, T\, \log\frac{L}{\delta}}  \\
\label{eq:rboundSum}
\end{split}
\end{equation}
also holds with probability at least $1-\delta$. We obtain the statement of the lemma by using the union bound.
\end{proof}

Now we are ready to prove our main result.

\begin{proof}[Proof of Theorem~\ref{thm:main}]
Fix some $l$. Let $D_{l,T} = \sum_{t=1}^T\bd_t\left(\bs_l^{(t)}, \ba_l^{(t)}\right)$  
and $C_{l,T} = \sum_{t=1}^T\bc_t\left(\bs_l^{(t)}, \ba_l^{(t)}\right)$. Fix $z=(s,a)$. Let $\btau(t)$ be the number of time steps that we have observed $z$ up to time $t$. Let
\[
\bd_t(s,a) = \beta_{r,\btau(t)}(\delta) \sum_{s'}  \| \varphi(x) \|_{\bN_{\tau(t),s,a,s'}^{-1}}.
\]
By Lemma~\ref{lem:conf-intervals}, $\bd_t(s,a)$ is an upper bound on the error of our transition estimates, thus satisfying the condition of Lemmas~\ref{lem:p_to_mu} and \ref{lem:trans_to_v}.

Let $\bt(\tau)$ be the timestep that we observe $(s,a)$ for the $\tau$th time. Notice that $\btau(\bt(\tau)) = \tau$. Let $\epsilon_\tau^2 =  \min\left(  \| \varphi(x_{\bt(\tau)}) \|_{\bN_{\tau,z,s'}^{-1}}^2,1 \right)$. As $\bd_t \le 2$, we can write
\begin{align}
\notag
D_{l,T} &= 2 \sum_{z\in (\S_l, \A)} \one{\bs_l^{(t)}=s, \ba_l^{(t)}=a} \min\{\bd_t \left(z\right) ,1 \} \\
\notag
&= 2\sum_{z\in (\S_l, \A)} \sum_{\tau=1}^{\btau(T)} \min\{\bd_{t(\tau)} \left(z\right) ,1 \} \\
\notag
&= \sum_{z\in (\S_l, \A)} \sum_{\tau=1}^{\btau(T)} \beta_{r,\tau}(\delta) \sum_{s'} \epsilon_\tau \\
\notag
&\le \beta_{r,T}(\delta) \sum_{z\in (\S_l, \A)} \sum_{s'} \sum_{\tau=1}^{\btau(T)}  \epsilon_\tau \\
\notag
&\le \beta_{r,T}(\delta) \sum_{z\in (\S_l, \A)} \sum_{s'} \sqrt{ \btau(T) \sum_{\tau=1}^{\btau(T)}  \epsilon_\tau^2 } \\
\label{eq:D}
&\le 2 \beta_{r,T}(\delta) |\S_l||\S_{l+1}| |\A| \sqrt{ T n \log \left( 1+ \frac{T L^2}{n} \right) } \;,
\end{align}
where the last inequality follows from Lemma~\ref{lem:dir-var}. Similarly, we can prove that
\begin{equation}
\label{eq:C}
C_{l,T} \le 2 \beta_{P,T}(\delta) |\S_l| |\A| \sqrt{ T m \log\left( 1 + \frac{T L^2}{m} \right) } \;.
\end{equation}
Summing up these bounds for all $l=0,1,\dots,L-1$, using Inequality~\eqref{eq:regret-decomp} and Lemma~\ref{lem:trans_to_v} gives the upper bound on the regret as
\begin{align*}
\expregret_T &\le (L+1) |\S| \sqrt{2\, T\, \log\frac{L}{\delta}} + 2 \beta_{P,T}(\delta) L |\S|^2 |\A| \sqrt{ T n \log \left( 1+ \frac{T L^2}{d} \right) } \\
&\qquad+ 2 \beta_{R,T}(\delta) |\S| |\A| \sqrt{ T m \log\left( 1 + \frac{T L^2}{d} \right) } \;.
\end{align*}
\end{proof}

\section{Extended dynamic programming}\label{sec:ext_DP}

The extended dynamic programming algorithm is given by Algorithm~\ref{alg:ext_DP}.
\begin{algorithm*}
\textbf{Input:} confidence sets of the form \eqref{eq:confset_r} and \eqref{eq:confset_P} .

\textbf{Initialization:} Set $\bw(s_L) = 0$.

\textbf{For $l=L-1,L-2,\dots,0$}
\begin{enumerate}
\item Let $k=|\S_{l+1}|$ and $\left(s^*_1,s^*_2,\dots,s^*_k\right)$ be a sorting of the states in $\X_{l+1}$ such that $\bw(s_1^*) \ge \bw(s_2^*) \ge \dots \ge \bw(s_k^*)$.
\item \textbf{For all $(s,a)\in \S_l\times\A$}
\begin{enumerate}
\item $r^*(s,a) = \min\left\{r^+(s,a),\, 1\right\}$.
\item $\Delta(s,a) = \sum_{i=2}^k p^-(s^*_i|s,a)$.
\item $P^*(s_1^*|s,a) = \min\left\{p^+(s'|s,a)-\Delta(s,a),\, 1\right\}$.
\item $P^*(s_i^*|s,a) = \sigma(\varphi(x_t)^\top \bttheta_t(s_i^*,s,a))$ for all $i=2,3,\dots,k$.
\item Set $j=k$.
\item \textbf{While $\sum_{i} P^*(s^*_i|s,a)>1$ do}
\begin{enumerate}
\item Set $P^*(s^*_j|s,a) = \max\left\{p^-(s_j^*|s,a), 1- \sum_{i\neq j} P^*(s^*_i|s,a)\right\}$
\item Set $j=j-1$.
\end{enumerate}
\end{enumerate}
\item \textbf{For all $s\in\S_l$}
\begin{enumerate}
\item Let $\bw(s) = \max_a \left\{r^*(s,a) + \sum_{s'} P^*(s'|s,a) \bw(s')\right\}$.
\item Let $\pi^*(s) = \argmax_a \left\{r^*(s,a) + \sum_{s'} P^*(s'|s,a) \bw(s')\right\}$.
\end{enumerate}
\end{enumerate}

\textbf{Return:} optimistic transition function $P^*$, optimistic reward function $r^*$, optimistic policy $\pi^*$.
\caption{Extended dynamic programming for finding an optimistic policy and transition model for a given confidence set of transition functions and given rewards.}
\label{alg:ext_DP}
\end{algorithm*}
The next lemma, which can be obtained by a straightforward modification of the proof of Theorem~7 of \citet{jaksch10ucrl}, shows that Algorithm~\ref{alg:ext_DP} efficiently solves the desired minimization problem.

\bigskip

\begin{lemma}
\label{lem:ext_DP}
Algorithm~\ref{alg:ext_DP} solves the maximization problem \eqref{eq:optimistic} for the confidence sets $\P_{x}(\bTheta)$ and $\R_{x}(\bLambda)$. Let 
$C=\sum_{l=0}^{L-1} |\S_l||\S_{l+1}|$ denote the maximum number of possible transitions in the given model. The time and space complexity of Algorithm~\ref{alg:ext_DP} is the number of possible non-zero elements of $P$ allowed by the given structure, and so it is $\OO(C|\A|)$, which, in turn, is $\OO(|\A| |\S|^2)$.
\end{lemma}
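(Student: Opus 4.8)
The plan is to prove the two claims---correctness of Algorithm~\ref{alg:ext_DP} for \eqref{eq:optimistic} and its $\OO(C|\A|)=\OO(|\A||\S|^2)$ complexity---separately. The starting point is that in a loop-free episodic MDP the value $W(r,\pi,P)$ obeys the exact backward recursion $\bw(s)=\max_a\{r(s,a)+\sum_{s'}P(s'|s,a)\bw(s')\}$ with $\bw(s_L)=0$ on the terminal layer; because the state graph is acyclic, value iteration over the $L$ layers is exact and needs no contraction or convergence argument. Crucially, the confidence sets \eqref{eq:confset_r} and \eqref{eq:confset_P} are coordinate-wise boxes, so in the space of reward/transition tables they factorize over state-action pairs. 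Hence the joint maximization over $(\pi,P,r)$ decouples: given the optimal layer-$(l+1)$ values $\bw(\cdot)$, each $(s,a)\in\S_l\times\A$ can be optimized independently, since rewards enter the objective additively and the row $P(\cdot|s,a)$ affects it only through $\sum_{s'}P(s'|s,a)\bw(s')$.

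I would then solve the per-$(s,a)$ inner problem. The reward part is immediate: the value is monotone in $r(s,a)\in[\br^-_t(s,a),\br^+_t(s,a)]$, so the optimum is $r^*(s,a)=\min\{\br^+_t(s,a),1\}$ (the clip reflecting $\sigma\in[0,1]$), which is step~2a. The transition part is the substance of the proof and the point where the argument departs from \citet[Theorem~7]{jaksch10ucrl}: one must maximize the linear objective $\sum_{s'}P(s'|s,a)\bw(s')$ over the box $\bp^-_t(s'|s,a)\le P(s'|s,a)\le\bp^+_t(s'|s,a)$ intersected with the simplex $\sum_{s'}P(s'|s,a)=1$---a box rather than the $\ell^1$-ball used in UCRL2. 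For this LP the greedy rule is optimal: sort successors by decreasing $\bw$, give the top successor the largest feasible mass $\min\{\bp^+,\,1-\sum_{i\ge2}\bp^-\}$, and shed any surplus by pushing the lowest-value successors down to their lower bounds one at a time. I would verify that steps~2b--2f implement exactly this rule and terminate at a feasible $P^*(\cdot|s,a)$; a standard exchange argument---moving mass from a lower- to a higher-value successor never decreases the objective---shows no feasible distribution can do better.

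Correctness then follows by backward induction on $l=L-1,\dots,0$: the base case is the initialization $\bw(s_L)=0$, and the inductive step is precisely the decoupled per-$(s,a)$ argument applied with the inductively optimal layer-$(l+1)$ values, so that step~3 returns the optimal value $\bw(s)$ and an optimal action $\pi^*(s)$ at every $s\in\S_l$; evaluating at $s_0$ and reading off $(\pi^*,P^*,r^*)$ recovers the maximizer of \eqref{eq:optimistic}. For the complexity, I would count work per layer: the transition update (steps~2a--2f) touches each successor in $\S_{l+1}$ a constant number of times---the while-loop lowers each successor at most once---costing $\OO(|\S_l||\A||\S_{l+1}|)$, and the value/policy update in step~3 costs the same. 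Summing over layers gives $\OO(|\A|\sum_{l}|\S_l||\S_{l+1}|)=\OO(C|\A|)$, and since $C=\sum_l|\S_l||\S_{l+1}|\le(\sum_l|\S_l|)^2=|\S|^2$ this is $\OO(|\A||\S|^2)$ (the per-layer sorting adds only a lower-order $\OO(|\S|\log|\S|)$ term); the space bound is identical because only the $\OO(C|\A|)$ nonzero entries of $P^*$ and $r^*$ are stored.

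The main obstacle I anticipate is exactly the transition LP. Because the confidence set here is a coordinate-wise box rather than the $\ell^1$-ball of \citet{jaksch10ucrl}, I must check carefully that the box-greedy rule is both correctly and completely implemented by steps~2b--2f, in particular handling the edge cases where the feasible polytope degenerates to a point or where the clipping at $1$ binds, and confirming that the while-loop always halts at a feasible, objective-maximizing distribution.
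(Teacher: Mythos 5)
Your proposal is correct and matches the argument the paper relies on: the paper gives no proof of Lemma~\ref{lem:ext_DP} at all, deferring to ``a straightforward modification of the proof of Theorem~7 of \citet{jaksch10ucrl},'' and your plan---exact backward induction over the acyclic layers, per-$(s,a)$ decoupling of the box-shaped confidence sets, the greedy/exchange argument for the linear program over the box intersected with the simplex, and the per-layer operation count summing to $\OO(C|\A|)$---is precisely that modification spelled out. The one point worth double-checking when you verify that steps~2b--2f ``implement exactly'' your greedy rule is that the algorithm initializes the non-top successors at their point estimates $\sigma(\varphi(x_t)^\top\bttheta_t(\cdot))$ rather than at their lower bounds and only then sheds surplus from the bottom, so your exchange argument should be phrased to cover that variant (and the case where the resulting row sums to less than one), but this does not change the substance of the proof.
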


\section{Conclusions}

In this paper, we introduced a model for online learning in episodic MDPs where the transition and reward functions can depend on some side information provided to the learner. We proposed and analyzed a novel algorithm for minimizing regret in this setting and have shown that the regret of this algorithm is $\widetilde{\OO}(L|\S|^2|\A|(n+m)\sqrt{T})$. While we are not aware of any theoretical results for this precise setting, it is beneficial to compare our results to previously known guarantees for other settings. 

First, the UCRL2 algorithm of \citet{jaksch10ucrl} enjoys a regret bound of $\widetilde{\OO}(L|\S|\sqrt{|\A|T})$ for $L$-step episodic problems with fixed transition and reward functions. This setting can be regarded as a special case of ours when $m=n=1$ and constant (or i.i.d.) side information, thus our bounds for this case are worse by a multiplicative factor of $\OO(|\S|\sqrt{|\A|})$. \footnote{This difference stems from the fact that we have to directly bound the error of $\left|\hbP_t(s'|s,a)-P_{x_t}(s'|s,a)\right|$ instead of the norm $\left\|\hbP_t(\cdot|s,a)-P_{x_t}(\cdot|s,a)\right\|_1$. While such a bound is readily available for the single-parameter linear setting, it is highly non-trivial whether a similar result is provable for generalized linear models.} However, our algorithm achieves low regret against a much richer class of policies, so our guarantees are far superior when side information has a large impact on the decision process.

There is also a large literature on temporal-difference methods that are model-free and learn a value function. Asymptotic behavior of temporal-difference methods \citep{Sutton-1998} in large state and action spaces is studied both in on-policy \citep{Tsitsiklis-Van-Roy-1997} and off-policy \citep{Sutton-Szepesvari-Maei-2009, Sutton-Maei-Precup-Bhatnagar-Silver-Szepesvari-Wiewiora-2009, Maei-Szepesvari-Bhatnagar-Precup-Silver-Sutton-2009} settings. All these results concern the policy estimation problem, i.e., estimating the value of a fixed policy. The available results for the control problem, i.e., estimating the value of the optimal policy, are more limited \citep{Maei-Szepesvari-Bhatnagar-Sutton-2010} and prove only convergence to local optimum of some objective function. It is not clear if and under what conditions these TD control methods converge to the optimal policy.

\citet{yu09ArbitraryRewardsTransitions, yu09Modulated} consider the problem of online learning in MDPs where the reward and transition functions may change arbitrarily after each time step. Their setting can be seen as a significantly more difficult version of ours, when the side information $x_t$ is only  revealed \emph{after} the learner selects its policy $\bpi_t$. One cannot expect to be able to compete with the set of dynamic policies using side information, so they consider regret minimization against the pool of stationary state-feedback policies. Still, the algorithms proposed in these papers fail to achieve sublinear regret. The mere existence of consistent learning algorithms for this problem is a very interesting open problem. 

An interesting direction of future work is to consider learning in unichain Markov decision processes where a new side information vector is provided after each transition made in the MDP. The main challenge in this setting is that long-term planning in such a quickly changing environment is very difficult without making strong assumptions on the generation of the sequence of side information vectors. Learning in the situation when the sequence $(x_t)$ is generated by an oblivious adversary is not much simpler than in the setting of \citet{yu09ArbitraryRewardsTransitions, yu09Modulated}: seeing one step into the future does not help much when having to plan multiple steps ahead in a Markovian environment.

We expect that a non-trivial combination of the ideas presented in the current paper with principles of online prediction of arbitrary sequences can help constructing algorithms that achieve consistency in the above settings.

\bibliography{bib}

\end{document}